\newtheorem{theorem}{Theorem}[section]
\definecolor{darkblue}{rgb}{0, 0, 0.5}
\title{A Taxonomy of Transcendence}
\author{Natalie Abreu, Edwin Zhang\thanks{Currently at OpenAI}, Eran Malach\thanks{Currently at Apple} \& Naomi Saphra \\
Kempner Institute for the Study of Natural and Artificial Intelligence \\
Harvard University \\
\texttt{\{natalieabreu,ezhang\}@g.harvard.edu, \{emalach,nsaphra\}@fas.harvard.edu}
}
\begin{document}

\ifcolmsubmission
\linenumbers
\fi

\maketitle

\begin{abstract}
Although language models are trained to mimic humans, the resulting systems display capabilities beyond the scope of any one person. To understand this phenomenon, we use a controlled setting to identify properties of the training data that lead a model to transcend the performance of its data sources. We build on previous work to outline three modes of transcendence, which we call \textit{skill denoising}, \textit{skill selection}, and \textit{skill generalization}. We then introduce a knowledge graph-based setting in which simulated experts generate data based on their individual expertise. We highlight several aspects of data diversity that help to enable the model's transcendent capabilities. Additionally, our data generation setting offers a controlled testbed that we hope is valuable for future research in the area. 
\end{abstract}

\section{Introduction}

Language models are trained to mimic human behavioral data. This mimicry makes it tempting to anthropomorphize a system---to think of it like a person. However, not only is the model not a \textit{person}, it is not even trained to mimic a person. Instead, the model has been trained to mimic a \textit{group} of people with individual capacities, predilections, and biases. In some circumstances, this combination of multiple people augments their shared biases \citep{10.1145/3442188.3445922}, but we also see the enormous advantage of training on data from a diverse set of people: often, it is possible to outperform any individual member of that group. The capacity of a generalist model to exceed individual ability is evident in a chatbot that can converse with equal competence about cryptography, international law, and the work of Dostoevsky. Our goal is to describe the circumstances in which a model, trained to mimic multiple people, is capable of \textbf{transcending} its sources by outperforming each individual.

For a group of humans, there are several ways to outperform any one individual in the group. One path is to allow the group to vote, counting each individual vote equally, thereby averaging out the bias of each individual. Another path is to route disparate knowledge by allowing a cryptographer to handle issues of cryptography and a lawyer to handle issues of international law. Finally, a group of specialists can use their shared understanding of the world as an anchor point to combine their knowledge, as when a cryptographer and a lawyer reason together through the legality of an algorithmic embargo. Like these human examples, an artificial model trained as a \textbf{generalist} can outperform the capabilities of the individual \textbf{specialists} it is trained to emulate, under the following conditions:

\begin{itemize}
    \item \textbf{Skill denoising: } When the biases of each specialist cancel out in their average -- this is classically described as ``wisdom of crowds'' or a voting ensemble. This scenario applies when all specialists produce data relevant to an input context, but each may make independent errors.
    \item \textbf{Skill selection: } When different specialists are experts on different parts of the context space. A given input may be common for one specialist and rare for another, and we can reasonably assume that the specialist who frequently encounters such inputs is better equipped to handle them. By routing to the appropriate expert, the generalist model can address a wider range of topics than any individual alone. This scenario applies when the input context is familiar to at least one specialist.
    \item \textbf{Skill generalization: } When specialist expertise can be combined by using assumptions of shared underlying structure. Prior assumptions about how to generalize from training data allow the generalist model to interpolate, reason, and compose knowledge across domains by representing inputs in a shared semantic space. This scenario applies when the input context is unfamiliar to all individual specialists but can be understood through generalization.
\end{itemize}

In this paper, we examine the conditions under which a model can transcend its training sources, organizing them into a taxonomy of modes of transcendence. Furthermore, we connect each type of transcendence to conditions in the training data, showing that the model can transcend its sources only through a sufficiently heterogeneous set of experts. Our contributions are as follows:
\begin{itemize}
    \item Building on definitions from \cite{zhang2024transcendencegenerativemodelsoutperform}, we formalize three modes of transcendence: skill denoising, skill selection, and skill generalization.
    \item We propose conditions that the training data must hold in order to achieve each mode of transcendence, and build intuition using simple theoretical settings. 
    \item Using a synthetic knowledge graph-based setting, we empirically confirm the necessary conditions for each mode of transcendence.
\end{itemize}

We hope this will provide a useful framework for further theoretical and empirical work studying the ability of imitative models to achieve transcendent capabilities.

\section{Definitions}

\subsection{Preliminaries}

We borrow much of our notation from \citet{zhang2024transcendencegenerativemodelsoutperform} to align with their definitions. Let $\mathcal{X} = \mathcal{T}^n$ denote the input space and $\mathcal{Y} = \mathcal{T}^m$ denote the output space over a set of tokens $\mathcal{T}$. Let $F$ be a class of functions mapping $\mathcal{X}$ to $P(\mathcal{Y})$ where $P(\cdot)$ denotes a probability distribution over the given space. Each function $f \in F$ then defines a conditional probability distribution of $y \in \mathcal{Y}$ given $x \in \mathcal{X}$ that we denote by $f(y|x)$.

Let there be $k$ experts, each associated with a conditional probability function $f_i \in F$. Let $p_1, \dots, p_k$ denote the respective input distributions over $\mathcal{X}$ for each expert. Define the average input distribution as $\bar{p}(x) = \frac{1}{k} \sum_{i=1}^k p_i(x)$, and let $\text{supp}(\bar{p})$ denote its support. Define the mixture model $\bar{f}(y|x) = \sum_{i=1}^k g(i|x)f_i(y|x)$ where $g$ denotes the conditional probability that input $x$ is observed under expert $i$.

Then, each expert $i$ induces a distribution $\mathcal{D}_i$ over $\mathcal{X} \times \mathcal{Y}$, defined by $\mathcal{D}_i(x,y) = p_i(x)f_i(y|x)$. Let $\bar{\mathcal{D}}$ be the mixed distribution $\bar{\mathcal{D}}(x,y) = \frac{1}{k} \sum_{i=1}^k \mathcal{D}_i(x,y)$.

We measure the quality or skill level of each expert with a reward function $r: \mathcal{X} \times \mathcal{Y} \to \mathbb{R}$. Specifically, for a distribution $p$ over $\mathcal{X}$ and some $f \in F$, define the average reward of $f$ under $p$ as:
\[R_p(f) = \mathbb{E}_{x \sim p}\left[r_x(f)\right]\text{,}\quad \text{where } r_x(f) = \mathbb{E}_{y \sim f(\cdot|x)}\left[r(x,y)\right]  \]

For some fixed hypothesis class $\mathcal{H} \subseteq \{ h : \mathcal{X} \to P(\mathcal{Y}) \}
$, the learner chooses some function $h_{\bar{\mathcal{D}}} \in \mathcal{H}$ that minimizes the expected cross-entropy with the mixture model $\bar{f}$:

\[h_{\bar{\mathcal{D}}} = \arg\min_{h\in \mathcal{H}} \mathbb{E}_{x \sim \bar{p}} [ H(\bar{f}(\cdot\mid x), h(\cdot\mid x)) ] \]
where $H$ is the cross-entropy function.

We use the definition of ``transcendence" from \cite{zhang2024transcendencegenerativemodelsoutperform}, as follows: 
For some test distribution $p_{\textrm{test}}$ over $\mathcal{X}$, we say a model achieves transcendence if
\[R_{p_{\textrm{test}}}(h_{\bar{\mathcal{D}}}) > \max_{i \in [k]} R_{p_{\textrm{test}}} (f_i)\]

We now proceed to formalize three modes of achieving transcendence.

\subsection{Skill Denoising} Trained on noisy experts who make occasional errors, a model may outperform the experts by simply denoising their outputs.
This setting is characterized by the following simplifying assumptions:
\begin{enumerate}
    \item \textbf{\textit{Single input distribution:}} All experts share the same distribution over $\mathcal{X}$; that is, for all $i$, we have $p_i = \bar{p}$.\label{assumption:distr}
    \item \textbf{\textit{In-domain test distribution:}} The support of the test distribution $p_{\textrm{test}}$ is contained in the support of $\bar{p}$, i.e. we have $\text{supp}(p_{\textrm{test}}) \subseteq \text{supp}(\bar{p})$. In other words, any example that has non-zero probability under the test distribution has non-zero probability under the training distribution.\label{assumption:in_domain}
\end{enumerate}

As studied in \cite{zhang2024transcendencegenerativemodelsoutperform}, transcendence in this setting can be achieved by \textit{low temperature sampling} as long as expert errors are uncorrelated. Low temperature sampling returns the mode of the predicted output distribution, which corresponds to a ``wisdom of the crowd" approach. 

\subsection{Skill Selection} In this setting, we drop Assumption~\ref{assumption:distr} from the denoising setting. Without this assumption, experts may have differing distributions over the input space. The resulting \textit{specialist} experts each have specific \textit{expertise}, or a subset of inputs on which they will predict a correct output. These subsets vary across experts. This setting maintains Assumption~\ref{assumption:in_domain} that the support of the test distribution is contained in the support of the train distribution. 

We claim that transcendence can be achieved when a given input context is more likely to be observed for experts that achieve a higher expected reward on that context.  Specifically, the probability $g(i|x)$ of sampling expert $i$ on context $x$ is a function dependent on $x$ rather than a constant function. Intuitively, this reflects a scenario in which experts are more likely than non-experts to encounter and respond to inputs within their domain—for example, a lawyer is more likely to comment on questions of law.

As a simple example, we analyze the case of two experts. 

\begin{theorem}\label{thm:skill_selection} Let $a$ and $b$ be two experts. For transcendence to hold, we must have that 
\[\mathbb{E}_{x\sim p_{\textrm{test}}} \left[(r_x(f_a) - r_x(f_b))(g(a|x) - g(b|x))\right] > 0\]
\end{theorem}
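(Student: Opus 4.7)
The plan is to show this is a \emph{necessary} condition: assuming transcendence, I will derive the stated inequality.

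First, I will argue that, under a sufficiently expressive hypothesis class $\mathcal{H}$, the cross-entropy minimizer $h_{\bar{\mathcal{D}}}$ coincides with the mixture predictor $\bar{f}(y\mid x) = g(a\mid x)f_a(y\mid x) + g(b\mid x)f_b(y\mid x)$. This is a standard fact about cross-entropy (its unique minimizer over all conditional distributions is the true conditional of the data-generating distribution), and I will invoke it as the structural premise connecting the learner to the mixture. Given this identification, linearity of expectation in $y$ immediately gives
\[
r_x(\bar{f}) = g(a\mid x)\,r_x(f_a) + g(b\mid x)\,r_x(f_b),
\]
and hence $R_{p_{\textrm{test}}}(h_{\bar{\mathcal{D}}}) = \mathbb{E}_{x\sim p_{\textrm{test}}}\bigl[g(a\mid x)\,r_x(f_a) + g(b\mid x)\,r_x(f_b)\bigr]$.

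Next, I will use the trivial bound $\max(R_{p_{\textrm{test}}}(f_a), R_{p_{\textrm{test}}}(f_b)) \geq \tfrac{1}{2}\bigl(R_{p_{\textrm{test}}}(f_a) + R_{p_{\textrm{test}}}(f_b)\bigr)$. Transcendence by definition requires $R_{p_{\textrm{test}}}(h_{\bar{\mathcal{D}}})$ to exceed this max, hence also to exceed the average. Substituting the formula for $R_{p_{\textrm{test}}}(h_{\bar{\mathcal{D}}})$ yields
\[
\mathbb{E}_{x\sim p_{\textrm{test}}}\Bigl[\bigl(g(a\mid x) - \tfrac{1}{2}\bigr) r_x(f_a) + \bigl(g(b\mid x) - \tfrac{1}{2}\bigr) r_x(f_b)\Bigr] > 0.
\]
Finally, since there are only two experts, $g(a\mid x) + g(b\mid x) = 1$, so $g(a\mid x) - \tfrac{1}{2} = \tfrac{1}{2}(g(a\mid x) - g(b\mid x))$ and similarly with a sign flip for $b$. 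Factoring produces $\tfrac{1}{2}\mathbb{E}[(r_x(f_a) - r_x(f_b))(g(a\mid x) - g(b\mid x))] > 0$, which is the stated inequality.

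The main conceptual obstacle is the first step: justifying $h_{\bar{\mathcal{D}}} = \bar{f}$. If $\mathcal{H}$ is not expressive enough to represent $\bar{f}$ exactly, the argument only bounds $R_{p_{\textrm{test}}}(h_{\bar{\mathcal{D}}})$ by the rewards of a projected predictor, and the clean rewriting breaks. I would therefore state realizability (or $\bar{f} \in \mathcal{H}$) as an explicit assumption, consistent with how the setting is framed in the preceding section. The remaining algebra — the max-vs-average bound and the two-expert identity $g(a\mid x) + g(b\mid x) = 1$ — is routine, and I would not belabor it.
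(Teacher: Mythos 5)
Your proof is correct and takes essentially the same route as the paper's: after identifying $h_{\bar{\mathcal{D}}}$ with the mixture $\bar{f}$, both arguments reduce to the same algebraic identity, since your ``beat the average'' step $R_{p_{\textrm{test}}}(h_{\bar{\mathcal{D}}}) > \tfrac{1}{2}\bigl(R_{p_{\textrm{test}}}(f_a)+R_{p_{\textrm{test}}}(f_b)\bigr)$ is exactly the sum of the two inequalities $R_{p_{\textrm{test}}}(h_{\bar{\mathcal{D}}})>R_{p_{\textrm{test}}}(f_a)$ and $R_{p_{\textrm{test}}}(h_{\bar{\mathcal{D}}})>R_{p_{\textrm{test}}}(f_b)$ that the paper derives separately and adds. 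Your explicit flagging of the realizability assumption $\bar{f}\in\mathcal{H}$ is a useful precision that the paper's proof leaves implicit.
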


That is, the excess probability of seeing a given example under expert $a$ compared to expert $b$ is correlated with the excess reward of expert $a$ on that example. We provide the proof in Appendix~\ref{appendix:selection}.

\subsection{Skill Generalization} In this setting, we drop Assumption~\ref{assumption:in_domain}, that the support of the test distribution is contained in the support of the train distribution. More strongly, we will now assume that $\text{supp}(p_{\textrm{test}})~\cap~\text{supp}(\bar{p}) = \emptyset$; that is, the inputs at test time are never seen during training. 

How can a model answer correctly if no individual expert can? If expert knowledge is representable in a shared latent space, the model can compose knowledge from different experts, producing knowledge outside the scope of any one expert's knowledge.

We evaluate this idea in a two-hop fact completion task over a knowledge graph. Each expert observes and labels a set of one-hop facts, and is able to answer certain two-hop queries composed entirely from their own knowledge. However, the test queries require combining information from multiple experts: no single expert has access to both necessary one-hop facts.

If a learner is biased toward simple solutions, and if the compositional structure of the task is simpler than memorizing all two-hop inputs, the model may generalize by composing reusable one-hop components. This enables it to answer novel two-hop queries that no individual expert could answer. We explore this mechanism more formally in Appendix~\ref{appendix:generalization}.

\section{Knowledge graph setting}
\label{sec:data}
\begin{figure}[t]
    \centering
    \includegraphics[clip, trim=2cm 10cm 28cm 8cm, width=0.95\linewidth]{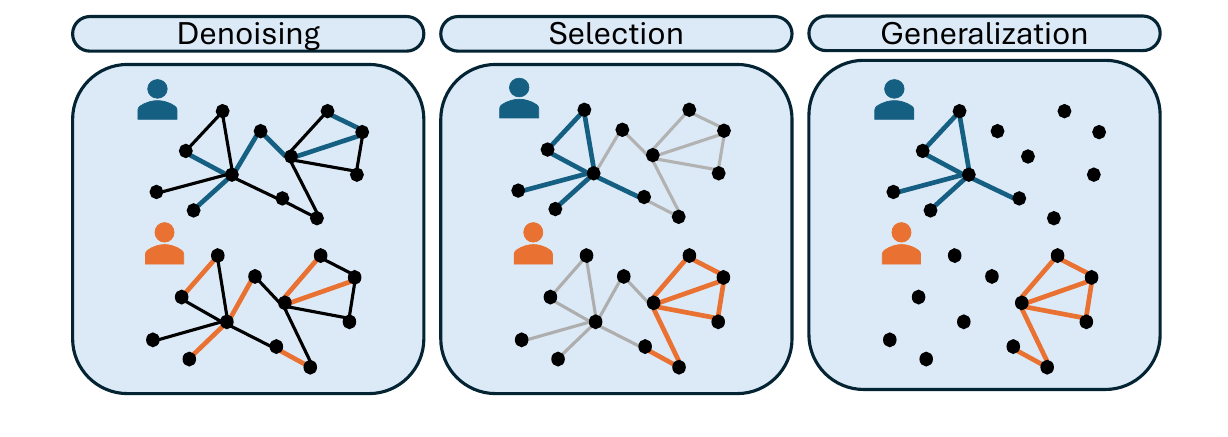}
    \caption{Illustration of the expert distributions in our knowledge-graph setting. Edges in blue and orange denote facts that the experts will respectively get correct, and the remaining edges they will get wrong. Edge opacity represents the probability of an expert generating a sample based on that edge. From left to right: (1) Denoising setting:  experts have unbiased errors and uniform probability of generating each fact. (2) Selection setting: specialized experts are more likely to generate data within their expertise. (3) Generalization setting: a subcase of the setting of specialized experts. For simplicity in our generalization experiments, we set the probability of an expert generating an incorrect fact to 0.}
    \label{fig:experts}
\end{figure}

\begin{figure}
    \centering
    \includegraphics[clip, trim=0cm 5cm 1.5cm 4.5cm, width=\linewidth]{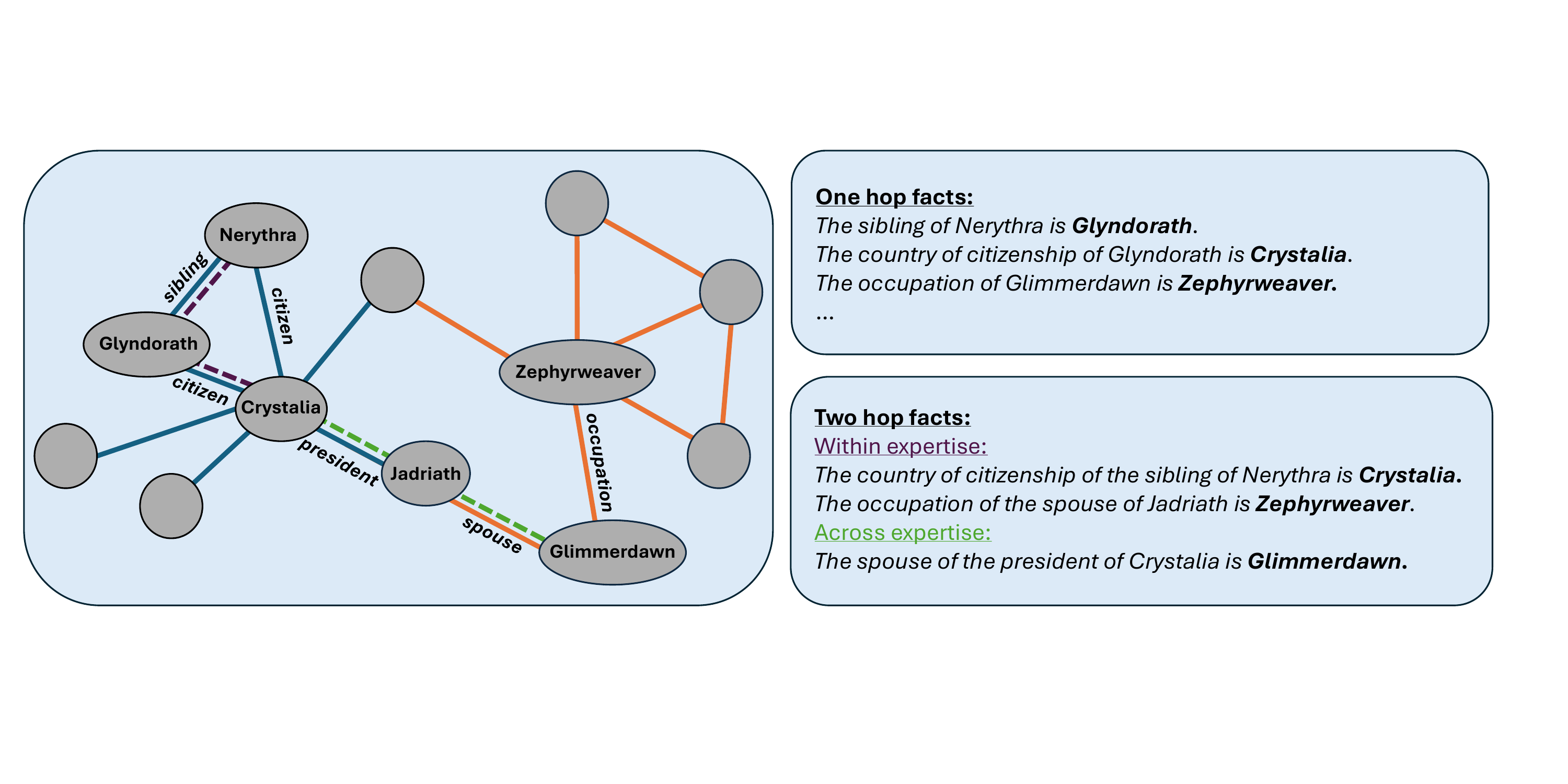}
    \caption{Example visualization of the knowledge graph with fictional entities. The blue and orange edges represent the knowledge of two different experts. A subset of nodes and edges are labeled with example entities and relations. Two-hop facts, as used in the skill generalization experiments, are split into ``within expertise" (known by at least one expert) and ``across expertise" (known by no expert) categories.}
    \label{fig:kg}
\end{figure}


Our experiments use a synthetic knowledge graph and corresponding corpora of natural language sequences describing the relations in the graph.
Consider the ground-truth knowledge graph $G$ consisting of a set of nodes $V$ (representing entities) and edges $E$ (representing relational facts between entities). Each fact is represented by a tuple (\textit{head}, \textit{relation}, \textit{tail}), where the \textit{head} and \textit{tail} are entities in the graph. We create our knowledge graph by taking the structure of the \textsc{wikidata}-based graph from \cite{cohen2023evaluatingrippleeffectsknowledge} and using GPT-4o-mini \citep{openai2024gpt4technicalreport} to replace the entities with fictional names. The result is a realistically structured knowledge graph populated by facts unseen during pretraining (See Appendix~\ref{appendix:kg_generation} for further details).

In a set of $n_e$ experts, expert $i$ has a personal knowledge graph $G_{i}$ that contains their knowledge of the world. Each expert is assigned a predefined amount of correct knowledge from the ground-truth graph, along with a number of incorrect beliefs. 

Incorrect beliefs are modeled by introducing corrupted edges into the expert’s graph. Specifically, for a fact (\textit{head}, \textit{relation}, \textit{tail}) from the ground truth knowledge graph, we generate a corrupted version by replacing either the head or the tail with a different entity of the same type. Each entity in the graph is assigned a semantic type (e.g., country, person, occupation), and substitutions are restricted to entities of the same type to preserve syntactic plausibility.

For skill selection and skill generalization, we use spectral clustering over the edges to define potential areas of expertise for our synthetic experts. We split the graph into 5,000 clusters of edges. Each expert is then assigned knowledge of one or more clusters. Further details on how experts are generated in each setting are provided in the methodology sections specific to each experiment.

\textbf{Training data:} Each sample is a paragraph about a specific entity, emulating an expert writing about a particular topic. To generate a dataset of $N$ samples from a set of $n_e$ experts, we have each expert generate $N/n_e$ samples unless otherwise specified. To generate each sample, we sample a node uniformly at random from the expert’s personal knowledge graph and generate a templated sentence for each edge (fact) connected to that node. Templates follow a simple pattern such as: ``\textit{The \{relation\} of \{head\} is \{tail\}}.'' The sentences are written in random order. See Figure~\ref{fig:kg} for examples of the templated sentences.

\textbf{Evaluation:} We evaluate each model on its query completion accuracy, computed as follows. For each fact in the ground-truth set -- represented as a triple \textit{(head, relation, tail)} -- we construct a query by removing the tail and prompting the model with the remaining components (e.g., ``\textit{The \{relation\} of \{head\} is $\underline{\phantom{xx}}$}''). A prediction is considered correct if the model’s output exactly matches the ground-truth tail. For queries with multiple correct tails, we mark the output correct if it matches any correct tail. The query completion accuracy is then defined as the percentage of facts for which the model correctly outputs the tail. In other words, we evaluate the percentage of the ground truth facts the model has memorized.

\textbf{Experiment setup:}
Unless otherwise specified, all experiments finetune a pretrained GPT2 model \citep{Radford2019LanguageMA}. Since the knowledge graph contains entities that are completely fictional, the model's pretraining phase does not include any facts from our knowledge graph. All runs use AdamW \citep{loshchilov2019decoupledweightdecayregularization} with learning rate 0.001, weight decay 0.1, 1000 steps of warmup with a cosine decay schedule and batch size 24.

\section{Skill denoising}

\begin{figure}
    \centering
    \includegraphics[width=\linewidth]{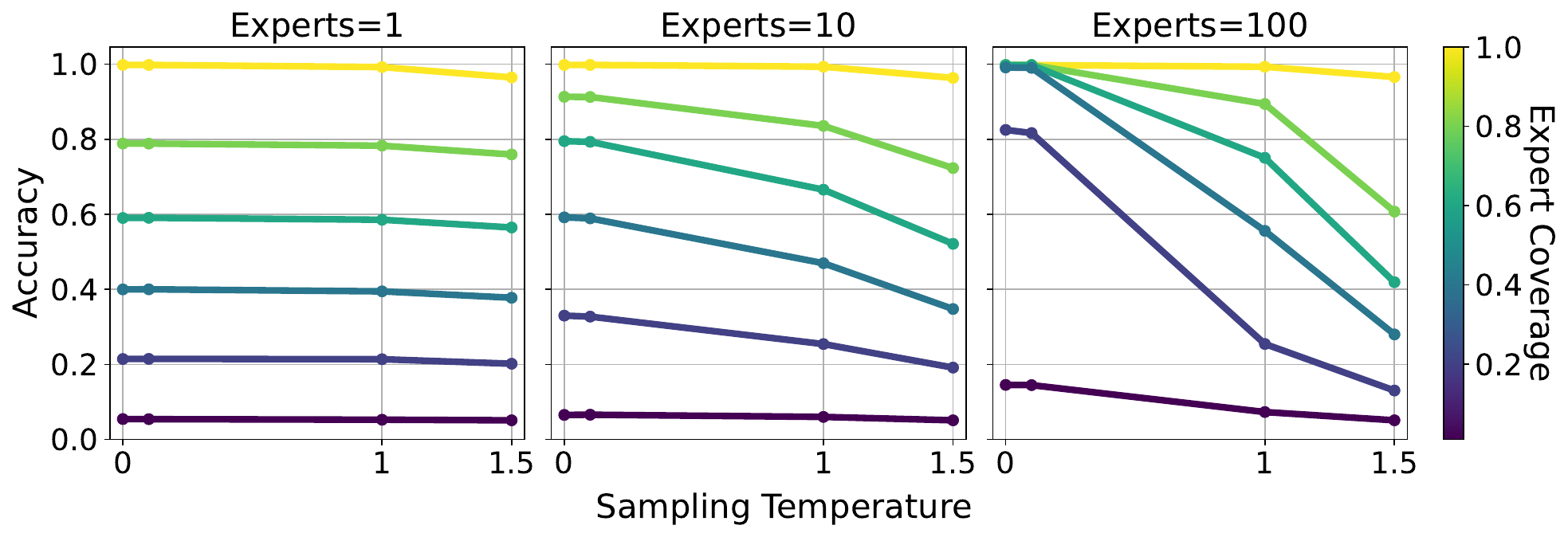}
    \caption{When trained on noisy experts, the model can leverage \textit{low-temperature sampling} to transcend expert skill level. }
    \label{fig:denoising_temperature}
\end{figure}

When expert sources commonly make factual errors -- but each makes different errors -- their collective majority vote is often correct. By leveraging this ``wisdom of the crowd'', a model trained on their data can transcend the accuracy of any single expert by denoising their mistakes.

\paragraph{Methodology} For some number of experts $n_e$, we assign all experts a shared \textit{coverage level} $c \in [0,1]$, defined as the fraction of the knowledge graph that each expert knows correctly. To create a personal knowledge graph $G_{i}$ for expert $i$, we iterate over each edge in the ground-truth graph: with probability $c$, we include the correct edge, and with probability $1-c$ we instead include a corrupted edge as described in Section~\ref{sec:data}.
In our experiments, we vary the number of experts $n_e$ as a proxy for uncorrelated errors. Since errors are sampled uniformly and independently for each expert, increasing the number of experts leads to a more uniform distribution of errors in the resulting dataset. 

\paragraph{Results} All experiments are run on 10M samples per configuration. For plots where temperature is unspecified, greedy decoding (i.e., temperature 0) is used. Accuracy refers to the query completion accuracy defined in Section~\ref{sec:data}. Consistent with our intuition, we find that with a sufficient number of experts, the model can substantially outperform individual expert skill level. Even for expert coverage scores as low as 0.2, the model achieves over $80\%$ query accuracy when trained with 100 experts. Accuracy across different coverage levels is shown in Figure~\ref{fig:denoising_coverage}. As seen in Figure~\ref{fig:denoising_temperature}, low-temperature sampling allows the model to achieve high accuracy, while accuracy at temperature 1.0 remains close to the underlying expert coverage level.  

\begin{wrapfigure}{r}{0.5\textwidth}
  \vspace{-10mm}
  \begin{center}
    \includegraphics[width=0.48\textwidth]{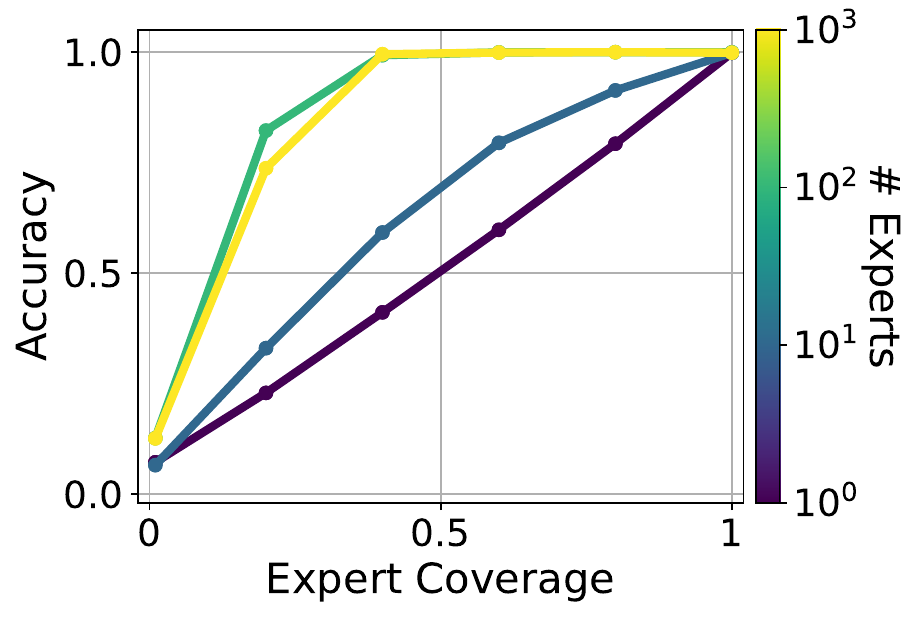}
  \end{center}
  \caption{Expert coverage vs query accuracy on one hop facts for the denoising setting. With enough experts, even with low expert coverage, the model is highly accurate.}
  \label{fig:denoising_coverage}
  \vspace{-10mm}
\end{wrapfigure}

\paragraph{Takeaway} Data diversity, in the form of uncorrelated errors, enables a model to perform at a higher skill level by leveraging the ``wisdom of the crowd'' with low-temperature sampling.

\section{Skill selection}
\label{sec:selection}
In reality, nonexperts often share their misconceptions about a topic, so their errors are rarely uncorrelated. We therefore cannot assume that a majority vote always provides transcendence through denoising. However, models can still outperform all individual experts across the training set by choosing answers from sources with the relevant expertise. For a model to transcend the experts by skill selection, it needs training data in which experts comment more on topics within their expertise than on topics where they hold common misconceptions.

\begin{figure}
    \centering
    \includegraphics[width=0.82\linewidth]{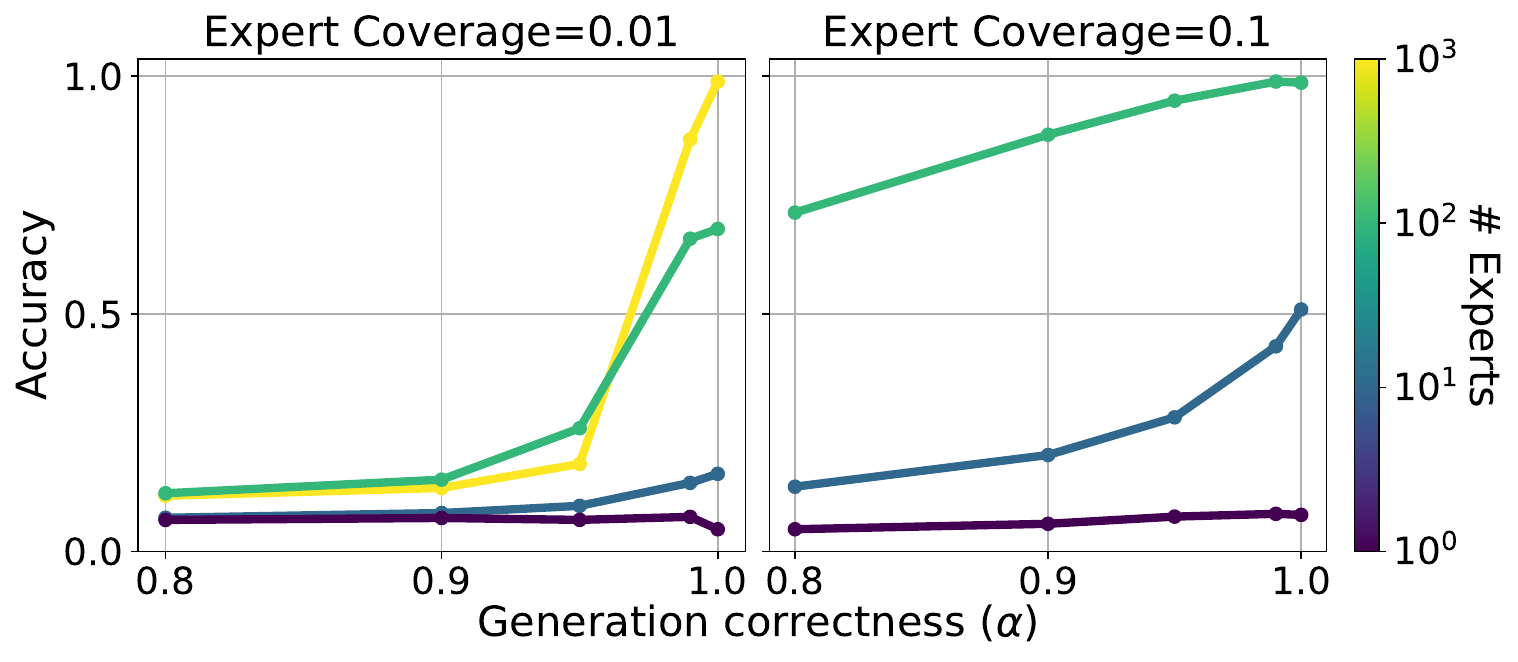}
    \caption{$\alpha$ vs query accuracy on one hop facts for the selection setting with two different coverage levels. With a sufficient number of experts and high probability that experts generate data within their ``expertise'', the model can achieve high accuracy even when each individual expert knows only a fraction of the knowledge graph.}
    \label{fig:selection}
\end{figure}

\paragraph{Methodology} As described in Section~\ref{sec:data}, we partition the edges of the knowledge graph into 5,000 clusters to define potential areas of expertise. We set a  coverage level $c \in [0,1]$ shared by all experts, indicating the fraction of the graph each expert knows correctly.  For each expert $i$, we generate a \textit{coverage vector} $s_i = (s_i^{(1)}, \dots, s_i^{(5000)})$, where each $s_i^{(j)} \in [0,1]$ indicates expert $i$'s accuracy on edges in cluster $j$. This vector is constructed to satisfy:
\[\sum_{j=1}^{5000} s_i^{(j)} \cdot |C_j| = c \cdot |E|\]
where we denote the set of edges in cluster $j$ by $C_j$ and $|E|$ is the total number of edges in the graph. To construct expert $i$'s personal knowledge graph, we iterate over each edge $e \in E$. Let $j$ be the cluster membership of $e$, i.e., $e \in C_j$. Then with probability $s_i^{(j)}$ we include the correct edge and with probability $1-s_i^{(j)}$ we replace it with a corrupted version.

To generate training sequences, we sample a node uniformly at random from an expert's personal knowledge graph, as in the denoising setting. However, we now introduce another parameter $\alpha \in [0,1]$ which describes the extent to which each expert restricts commentary to their area of expertise. For each fact connected to the chosen node, we write the fact with probability  $p=\alpha s_i^{(j)} + (1 - \alpha)$, where $s_i^{(j)}$ is the expert’s coverage score for the cluster to which the edge belongs (i.e., the probability the edge is correct). In other words, $\alpha$ controls a weighted interpolation between the ``expertise” scores and the uniform distribution. With $\alpha=1$, a fact will be written with probability equal to the expert’s level of expertise over the cluster the edge belongs to, and with lower $\alpha$, we will train on more facts outside of the expert’s expertise. This provides a mechanism to control the extent to which experts write about what they know versus what they don't know. 

\paragraph{Results} Each training run is on a dataset of 1M paragraphs and is trained for 10 epochs. We set two coverage levels of 0.01 and 0.1 and we vary $\alpha$ for a range of values between 0.8 and 1. Increasing the number of experts corresponds with increasing diversity of expertises. As shown in Figure~\ref{fig:selection}, for both levels of coverage, the model can achieve near-perfect accuracy with a sufficient number of experts. Moreover, accuracy consistently improves as
$\alpha$ increases, i.e, as experts are more likely to write about facts within their area of expertise.

\paragraph{Takeaway} When non-experts share common misconceptions about a subject (i.e. biased errors), transcendence is enabled by data diversity in the form of sources with varied expertise. To guarantee this transcendence, experts must write more frequently about topics within their domain of expertise than about those outside it.

\section{Skill generalization}

In the skill selection setting, at least one expert knows the correct answer to a question. In skill generalization, by contrast, no single expert knows the correct answer. In order to transcend through skill generalization, a model must compose knowledge from multiple experts by leveraging shared representations of the world. We study this setting using a compositional two-hop fact completion task, showing that a model can complete two-hop queries which no expert can by combining diverse expert knowledge in a latent space.

\paragraph{Methodology} We use a simplified version of the expert generation technique from Section~\ref{sec:selection} in which each expert has knowledge of a single cluster. For data generation, we generate samples from each expert proportional to the size of their known cluster. As in previous experiments, the training data consists of paragraphs about entities generated from each expert's personal knowledge graph $G_{i}$.

To evaluate the model's ability to combine knowledge from different experts, we draw on the framework proposed by \cite{yang2024largelanguagemodelslatently}, which measures the model's latent compositional ability via accuracy on multi-hop facts for which the model knows the relevant single hop facts. The model is said to have latent knowledge if it can correctly complete the multi-hop fact without generating the intermediate entity.  

To test compositional generalization, we use \textit{across-expertise} two-hop facts, facts where the two edges are in different clusters. By contrast, we refer to two-hop facts with both edges in the same cluster as \textit{within-expertise} two-hop facts. (See Figure~\ref{fig:kg}.) To teach the two-hop answer format, we include a set of within-expertise two-hop facts in the training data. We withhold a validation set of 6,000 within-expertise two-hop queries during training.

The across-expertise two-hop facts are unseen during training and require the model to combine specialized expertises. In addition to measuring accuracy on one-hop queries, we measure accuracy on the unseen validation set of within-expertise two-hop queries and on the test set of two-hop across-expertise queries. 

We compare two-hop accuracy to baselines based on \cite{yang2024largelanguagemodelslatently}, which describes two potential ``shortcut methods'' using simple co-occurrence statistics between entities and relations. We refer to these baselines as \textit{direct connection} and \textit{majority relation}. The direct connection baseline tests if there is a direct one-hop connection between the head and tail. The majority relation baseline measures the accuracy achieved from simply assigning
the entity of the correct type (e.g., country, person, occupation) which appears most often with the
second relation of the two-hop fact. (See Appendix~\ref{appendix:generalization_experiments} for details). 

\begin{figure}
    \centering
    \includegraphics[width=\linewidth]{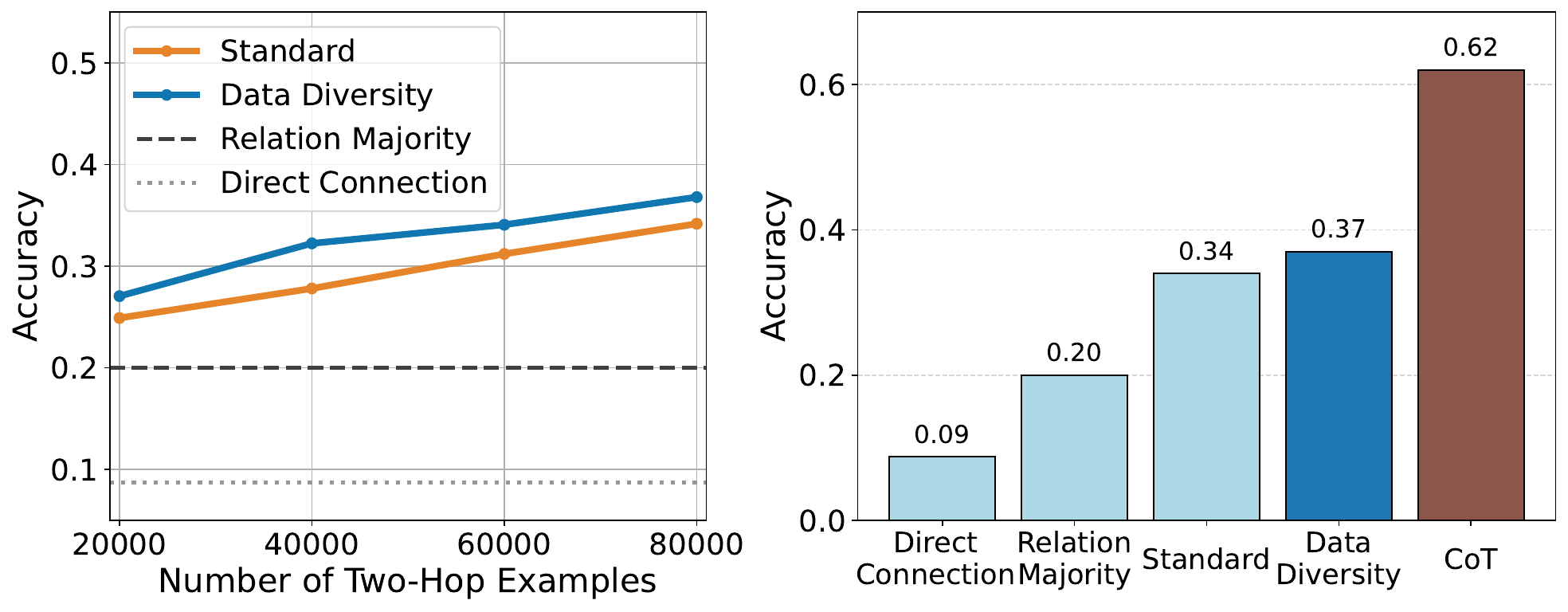}
    \caption{Left: Across-expertise two-hop query accuracy increases linearly with number of within-expertise two-hop examples seen during training. Right: Comparison of methods with full two-hop example training set in terms of across-expertise query accuracy. The standard method refers to training the model on paragraphs of one-hop facts as well as all 80,000 within-expertise two-hop sentences. For data diversity, we include diverse rephrasings of the one-hop and two-hop samples.}
    \label{fig:generalization}
\end{figure}

\paragraph{Experiment Details} Since this is a more difficult task we use the 1B parameter LLaMA 3.2 model \citep{grattafiori2024llama3herdmodels}. However, in line with prior findings \citep{yang2024largelanguagemodelslatently, allenzhu2024physicslanguagemodels32}, preliminary experiments in the base setting suggest that increasing model size offers limited improvement in two-hop performance. We train the model on 6M paragraphs containing one-hop facts about a given entity, combined with the training set of within-expertise two-hop facts. We repeat the two-hop facts 20 times each epoch in order to saturate accuracy on the training set and train for 10 epochs.

\paragraph{Results} Motivated by our analysis in Appendix~\ref{appendix:generalization}, we measure how increasing the size of the two-hop training set affects the performance on the unseen across-expertise two-hop facts. Our findings are reported in Figure~\ref{fig:generalization}. We find increasing the size of the training set steadily increases accuracy; by using all 80,000 within-expertise two-hop facts, the model achieves $34\%$ accuracy on across-expertise two-hop facts, which is a nontrivial improvement over the $20\%$ baseline. All models achieve near-perfect scores on one-hop query accuracy.

This suggests a core challenge in our setting: the number of two-hop facts known by individual experts is limited. We cannot continue to indefinitely increase the size of the training set. Therefore we explore two alternate methods: phrasing diversity and Chain-of-Thought (CoT).

\paragraph{Phrasing diversity} Inspired by \cite{allenzhu2024physicslanguagemodels31}, we consider how augmenting data to increase the diversity of phrasing may improve model performance. \citet{allenzhu2024physicslanguagemodels31} argued that this type of data augmentation allowed the model to retain factual information while encouraging the model to have better latent representations rather than memorizing context. We experiment with adding phrasing diversity to both the one-hop fact paragraphs and the two-hop facts. (See Appendix~\ref{appendix:generalization_experiments}). We find that adding diversity to the one-hop paragraph samples increased performance from $34\%$ to $37\%$, but adding diversity to the two-hop samples had no impact. We leave it to future work to explore whether more principled forms of data augmentation can enhance multihop capabilities.

\paragraph{Chain-of-Thought} Several prior works have found CoT \citep{wei2023chainofthoughtpromptingelicitsreasoning} to be essential in solving knowledge manipulation tasks such as solving multihop queries \citep{allenzhu2024physicslanguagemodels32, prystawski2023thinkstepstepreasoning}. Indeed, we see an immediate benefit of allowing the model to use CoT as it reaches over $60\%$ accuracy on across-expertise two-hop queries (Figure~\ref{fig:generalization}). We note that within our framework, when the model uses CoT to explicitly name the intermediate node, it reduces the \textit{skill generalization} problem into a \textit{skill selection} problem. Further details on the CoT method are given in Appendix~\ref{appendix:generalization_experiments}.

\paragraph{Takeaway} Although skill generalization is more challenging than other forms of transcendence explored, language models can nonetheless achieve it. By increasing the diversity of surface forms or phrasing, we can promote this form of transcendence. More noticeably, we promote skill generalization by increasing the diversity of compositions provided in the training data.

\section{Related work}

\paragraph{Transcendence} Recently, \cite{zhang2024transcendencegenerativemodelsoutperform} formally described the phenomenon of \textit{transcendence}, in which a model outperforms the individual skills of the humans who generated its data. \cite{zhang2024transcendencegenerativemodelsoutperform} proposed that when a model is trained on noisy data, low temperature sampling denoises the data if the noise comes from uncorrelated errors. As an example, they studied a chess model that achieves a 1500 Elo rating despite being trained to imitate players of rating 1000.  Our work extends the transcendence framework to additional settings and cases where expert errors are correlated. \cite{cunningham2023} similarly delineated ways in which an imitative model can transcend human performance.

\paragraph{Data diversity \& Knowledge acquisition}
Our setup is similar to \cite{allenzhu2024physicslanguagemodels31}, which used a synthetic biography dataset to show that data augmentation enables more flexible knowledge extraction. While they focus on extraction, we also study multi-hop composition. \cite{zhu2025effectivellmknowledgelearning} suggested adding diverse data formats to improve knowledge acquisition. \cite{allenzhu2024physicslanguagemodels32} showed that chain-of-thought is critical for knowledge manipulation; we similarly observe benefits but focus on how data diversity supports implicit representations. \cite{naik2024diversitythoughtimprovesreasoning} studied how diversity in prompting at inference time can improve a model's reasoning ability. \cite{chang2024largelanguagemodelsacquire} also used fictional entities to study knowledge acquisition, but focused on how knowledge is acquired over the course of training. Our finding that skill generalization relies on diverse examples of compositions also reflects a broader conclusion in the compositionality literature concerned with data diversity \citep{berlotattwell2024attributediversitydeterminessystematicity,levy-etal-2023-diverse,oren-etal-2021-finding,rahimi2024d3datadiversitydesign}.

\paragraph{Knowledge composition} Several works have studied failures of knowledge composition in Transformers  \citep{dziri2023faithfatelimitstransformers, press2023measuringnarrowingcompositionalitygap, wang2024alpineunveilingplanningcapability, yang2024largelanguagemodelslatently, saparov2023testinggeneraldeductivereasoning}. \cite{press2023measuringnarrowingcompositionalitygap} measured these failures as a \textit{compositionality gap}, the fraction of compositional questions which are incorrectly answered despite correct answers to their atomic components. They found that this gap does not shrink with model scale.  \cite{wang2024alpineunveilingplanningcapability} studied a path finding problem in which transformers must learn adjacency and reachability matrices. They found that transformers fail to learn reachability through transitive relationships, i.e. they cannot deduce a full path after seeing its segmented components. \cite{yang2024largelanguagemodelslatently} studied whether language models perform latent reasoning when answering questions that involve multi-hop knowledge. They intervened on each component hop to study how the model’s recall changes when the prompt changes. Their findings suggest that scaling model size can promote better recall for the first hop of reasoning, but not for the second hop. 

\paragraph{Skill generalization} Many works have studied the ability of neural networks to generalize beyond their training data, in particular through implicit regularization. Most relevant to our setting of skill generalization are \cite{goldblum2024freelunchtheoremkolmogorov}, which used information theory to argue that models are biased towards solutions with low Kolmogorov complexity, and \cite{zadrozny2000minimumdescriptionlengthcompositionality}, which demonstrates that memorization becomes high-complexity as the corpus grows, causing compositional behavior to arise in a formal language setting.

\paragraph{Ensembling \& Model fusion} In our work, a model might be seen as building an ensemble of the diverse experts who generated its training data. Because the training data reflects an implicit ensemble of experts, our results often mirror findings in the literature on model ensembling. These works have studied how to denoise, select, or combine knowledge from the model perspective -- that is, how to improve performance in these settings using multiple models or agents. For instance, \citet{wang2023selfconsistencyimproveschainthought} and \citet{li2024agentsneed} demonstrate how majority voting over a model's outputs can improve performance. Work on Mixture-of-Expert (MoE) models \citep{lepikhin2020gshardscalinggiantmodels, fedus2022switchtransformersscalingtrillion}, ensembling \citep{liu2021dexpertsdecodingtimecontrolledtext, li2022branchtrainmergeembarrassinglyparalleltraining, gururangan2023scalingexpertlanguagemodels} design architectures explicitly for skill selection by combining diverse expert models, whereas we define experts that generate the training data for a simpler architecture. Work on model fusion \cite[][~\textit{i.a.}]{wan2024knowledgefusionlargelanguage, mavromatis2024packllmsmodelfusion} is yet another way to combine diverse models for an interpolated ensemble. \citet{li2022branchtrainmergeembarrassinglyparalleltraining} and \citet{gururangan2023scalingexpertlanguagemodels} train separate models on different clusters of documents and combine models via ensembling for inference, deliberately exploiting data diversity through model architecture. 

\section{Discussion}

In this work, we outline three types of transcendence in which an imitative model can outperform the sources that generated its data. We analyze what conditions of the data must hold to allow for transcendence in each setting. In particular, we highlight several aspects of data diversity: For \textit{skill denoising}, low temperature sampling enables transcendence as long as errors are uncorrelated. For \textit{skill selection}, the model can accumulate knowledge from specialized experts as long as experts tend to generate data within their expertise. For \textit{skill generalization}, we draw on the model's simplicity bias to argue that the model will learn to combine expert knowledge once the training data has high enough complexity through its diverse phrasing and examples of knowledge composition.

While valuable for isolating specific phenomena, our controlled experimental setup is limited, and we encourage future work that investigates these ideas in more real-world settings. We also encourage future work developing additional settings in which transcendence may occur -- for example, our framework does not capture the idea of \textit{skill discovery}. We hope our work is a useful starting point for broader investigation in the area.

\section*{Acknowledgements}

This work has been made possible in part by a gift from the Chan Zuckerberg Initiative Foundation
to establish the Kempner Institute for the Study of Natural and Artificial Intelligence. Our work was improved by conversations with Annabelle Michael Carrell, Anat Kleiman, Benjamin L. Edelman, Milind Tambe, and Sham M. Kakade as well as with anonymous COLM reviewers. Some experiments were based on code provided by Annabelle Michael Carrell.

\bibliography{refs}

\begin{thebibliography}{35}
\providecommand{\natexlab}[1]{#1}
\providecommand{\url}[1]{\texttt{#1}}
\expandafter\ifx\csname urlstyle\endcsname\relax
  \providecommand{\doi}[1]{doi: #1}\else
  \providecommand{\doi}{doi: \begingroup \urlstyle{rm}\Url}\fi

\bibitem[Allen-Zhu \& Li(2024{\natexlab{a}})Allen-Zhu and Li]{allenzhu2024physicslanguagemodels31}
Zeyuan Allen-Zhu and Yuanzhi Li.
\newblock Physics of language models: Part 3.1, knowledge storage and extraction, 2024{\natexlab{a}}.
\newblock URL \url{https://arxiv.org/abs/2309.14316}.

\bibitem[Allen-Zhu \& Li(2024{\natexlab{b}})Allen-Zhu and Li]{allenzhu2024physicslanguagemodels32}
Zeyuan Allen-Zhu and Yuanzhi Li.
\newblock Physics of language models: Part 3.2, knowledge manipulation, 2024{\natexlab{b}}.
\newblock URL \url{https://arxiv.org/abs/2309.14402}.

\bibitem[Bender et~al.(2021)Bender, Gebru, McMillan-Major, and Shmitchell]{10.1145/3442188.3445922}
Emily~M. Bender, Timnit Gebru, Angelina McMillan-Major, and Shmargaret Shmitchell.
\newblock On the dangers of stochastic parrots: Can language models be too big?
\newblock In \emph{Proceedings of the 2021 ACM Conference on Fairness, Accountability, and Transparency}, FAccT '21, pp.\  610–623, New York, NY, USA, 2021. Association for Computing Machinery.
\newblock ISBN 9781450383097.
\newblock \doi{10.1145/3442188.3445922}.
\newblock URL \url{https://doi.org/10.1145/3442188.3445922}.

\bibitem[Berlot-Attwell et~al.(2024)Berlot-Attwell, Agrawal, Carrell, Sharma, and Saphra]{berlotattwell2024attributediversitydeterminessystematicity}
Ian Berlot-Attwell, Kumar~Krishna Agrawal, A.~Michael Carrell, Yash Sharma, and Naomi Saphra.
\newblock Attribute diversity determines the systematicity gap in vqa, 2024.
\newblock URL \url{https://arxiv.org/abs/2311.08695}.

\bibitem[Chang et~al.(2024)Chang, Park, Ye, Yang, Seo, Chang, and Seo]{chang2024largelanguagemodelsacquire}
Hoyeon Chang, Jinho Park, Seonghyeon Ye, Sohee Yang, Youngkyung Seo, Du-Seong Chang, and Minjoon Seo.
\newblock How do large language models acquire factual knowledge during pretraining?, 2024.
\newblock URL \url{https://arxiv.org/abs/2406.11813}.

\bibitem[Cohen et~al.(2023)Cohen, Biran, Yoran, Globerson, and Geva]{cohen2023evaluatingrippleeffectsknowledge}
Roi Cohen, Eden Biran, Ori Yoran, Amir Globerson, and Mor Geva.
\newblock Evaluating the ripple effects of knowledge editing in language models, 2023.
\newblock URL \url{https://arxiv.org/abs/2307.12976}.

\bibitem[Cunningham(2023)]{cunningham2023}
Tom Cunningham.
\newblock An {AI} {Which} {Imitates} {Humans} {Can} {Beat} {Humans}, 2023.
\newblock URL \url{tecunningham.github.io/posts/2023-09-05-model-of-ai-imitation.html}.

\bibitem[Dziri et~al.(2023)Dziri, Lu, Sclar, Li, Jiang, Lin, West, Bhagavatula, Bras, Hwang, Sanyal, Welleck, Ren, Ettinger, Harchaoui, and Choi]{dziri2023faithfatelimitstransformers}
Nouha Dziri, Ximing Lu, Melanie Sclar, Xiang~Lorraine Li, Liwei Jiang, Bill~Yuchen Lin, Peter West, Chandra Bhagavatula, Ronan~Le Bras, Jena~D. Hwang, Soumya Sanyal, Sean Welleck, Xiang Ren, Allyson Ettinger, Zaid Harchaoui, and Yejin Choi.
\newblock Faith and fate: Limits of transformers on compositionality, 2023.
\newblock URL \url{https://arxiv.org/abs/2305.18654}.

\bibitem[Fedus et~al.(2022)Fedus, Zoph, and Shazeer]{fedus2022switchtransformersscalingtrillion}
William Fedus, Barret Zoph, and Noam Shazeer.
\newblock Switch transformers: Scaling to trillion parameter models with simple and efficient sparsity, 2022.
\newblock URL \url{https://arxiv.org/abs/2101.03961}.

\bibitem[Goldblum et~al.(2024)Goldblum, Finzi, Rowan, and Wilson]{goldblum2024freelunchtheoremkolmogorov}
Micah Goldblum, Marc Finzi, Keefer Rowan, and Andrew~Gordon Wilson.
\newblock The no free lunch theorem, kolmogorov complexity, and the role of inductive biases in machine learning, 2024.
\newblock URL \url{https://arxiv.org/abs/2304.05366}.

\bibitem[Gururangan et~al.(2023)Gururangan, Li, Lewis, Shi, Althoff, Smith, and Zettlemoyer]{gururangan2023scalingexpertlanguagemodels}
Suchin Gururangan, Margaret Li, Mike Lewis, Weijia Shi, Tim Althoff, Noah~A. Smith, and Luke Zettlemoyer.
\newblock Scaling expert language models with unsupervised domain discovery, 2023.
\newblock URL \url{https://arxiv.org/abs/2303.14177}.

\bibitem[Lepikhin et~al.(2020)Lepikhin, Lee, Xu, Chen, Firat, Huang, Krikun, Shazeer, and Chen]{lepikhin2020gshardscalinggiantmodels}
Dmitry Lepikhin, HyoukJoong Lee, Yuanzhong Xu, Dehao Chen, Orhan Firat, Yanping Huang, Maxim Krikun, Noam Shazeer, and Zhifeng Chen.
\newblock Gshard: Scaling giant models with conditional computation and automatic sharding, 2020.
\newblock URL \url{https://arxiv.org/abs/2006.16668}.

\bibitem[Levy et~al.(2023)Levy, Bogin, and Berant]{levy-etal-2023-diverse}
Itay Levy, Ben Bogin, and Jonathan Berant.
\newblock Diverse demonstrations improve in-context compositional generalization.
\newblock In Anna Rogers, Jordan Boyd-Graber, and Naoaki Okazaki (eds.), \emph{Proceedings of the 61st Annual Meeting of the Association for Computational Linguistics (Volume 1: Long Papers)}, pp.\  1401--1422, Toronto, Canada, July 2023. Association for Computational Linguistics.
\newblock \doi{10.18653/v1/2023.acl-long.78}.
\newblock URL \url{https://aclanthology.org/2023.acl-long.78/}.

\bibitem[Li et~al.(2024)Li, Zhang, Yu, Fu, and Ye]{li2024agentsneed}
Junyou Li, Qin Zhang, Yangbin Yu, Qiang Fu, and Deheng Ye.
\newblock More agents is all you need, 2024.
\newblock URL \url{https://arxiv.org/abs/2402.05120}.

\bibitem[Li et~al.(2022)Li, Gururangan, Dettmers, Lewis, Althoff, Smith, and Zettlemoyer]{li2022branchtrainmergeembarrassinglyparalleltraining}
Margaret Li, Suchin Gururangan, Tim Dettmers, Mike Lewis, Tim Althoff, Noah~A. Smith, and Luke Zettlemoyer.
\newblock Branch-train-merge: Embarrassingly parallel training of expert language models, 2022.
\newblock URL \url{https://arxiv.org/abs/2208.03306}.

\bibitem[Liu et~al.(2021)Liu, Sap, Lu, Swayamdipta, Bhagavatula, Smith, and Choi]{liu2021dexpertsdecodingtimecontrolledtext}
Alisa Liu, Maarten Sap, Ximing Lu, Swabha Swayamdipta, Chandra Bhagavatula, Noah~A. Smith, and Yejin Choi.
\newblock Dexperts: Decoding-time controlled text generation with experts and anti-experts, 2021.
\newblock URL \url{https://arxiv.org/abs/2105.03023}.

\bibitem[{Llama Team}(2024)]{grattafiori2024llama3herdmodels}
{Llama Team}.
\newblock The llama 3 herd of models, 2024.
\newblock URL \url{https://arxiv.org/abs/2407.21783}.

\bibitem[Loshchilov \& Hutter(2019)Loshchilov and Hutter]{loshchilov2019decoupledweightdecayregularization}
Ilya Loshchilov and Frank Hutter.
\newblock Decoupled weight decay regularization, 2019.
\newblock URL \url{https://arxiv.org/abs/1711.05101}.

\bibitem[Mavromatis et~al.(2024)Mavromatis, Karypis, and Karypis]{mavromatis2024packllmsmodelfusion}
Costas Mavromatis, Petros Karypis, and George Karypis.
\newblock Pack of llms: Model fusion at test-time via perplexity optimization, 2024.
\newblock URL \url{https://arxiv.org/abs/2404.11531}.

\bibitem[Naik et~al.(2024)Naik, Chandrasekaran, Yuksekgonul, Palangi, and Nushi]{naik2024diversitythoughtimprovesreasoning}
Ranjita Naik, Varun Chandrasekaran, Mert Yuksekgonul, Hamid Palangi, and Besmira Nushi.
\newblock Diversity of thought improves reasoning abilities of llms, 2024.
\newblock URL \url{https://arxiv.org/abs/2310.07088}.

\bibitem[OpenAI(2024)]{openai2024gpt4technicalreport}
OpenAI.
\newblock Gpt-4 technical report, 2024.
\newblock URL \url{https://arxiv.org/abs/2303.08774}.

\bibitem[Oren et~al.(2021)Oren, Herzig, and Berant]{oren-etal-2021-finding}
Inbar Oren, Jonathan Herzig, and Jonathan Berant.
\newblock Finding needles in a haystack: Sampling structurally-diverse training sets from synthetic data for compositional generalization.
\newblock In Marie-Francine Moens, Xuanjing Huang, Lucia Specia, and Scott Wen-tau Yih (eds.), \emph{Proceedings of the 2021 Conference on Empirical Methods in Natural Language Processing}, pp.\  10793--10809, Online and Punta Cana, Dominican Republic, November 2021. Association for Computational Linguistics.
\newblock \doi{10.18653/v1/2021.emnlp-main.843}.
\newblock URL \url{https://aclanthology.org/2021.emnlp-main.843/}.

\bibitem[Press et~al.(2023)Press, Zhang, Min, Schmidt, Smith, and Lewis]{press2023measuringnarrowingcompositionalitygap}
Ofir Press, Muru Zhang, Sewon Min, Ludwig Schmidt, Noah~A. Smith, and Mike Lewis.
\newblock Measuring and narrowing the compositionality gap in language models, 2023.
\newblock URL \url{https://arxiv.org/abs/2210.03350}.

\bibitem[Prystawski et~al.(2023)Prystawski, Li, and Goodman]{prystawski2023thinkstepstepreasoning}
Ben Prystawski, Michael~Y. Li, and Noah~D. Goodman.
\newblock Why think step by step? reasoning emerges from the locality of experience, 2023.
\newblock URL \url{https://arxiv.org/abs/2304.03843}.

\bibitem[Radford et~al.(2019)Radford, Wu, Child, Luan, Amodei, and Sutskever]{Radford2019LanguageMA}
Alec Radford, Jeff Wu, Rewon Child, David Luan, Dario Amodei, and Ilya Sutskever.
\newblock Language models are unsupervised multitask learners.
\newblock 2019.
\newblock URL \url{https://api.semanticscholar.org/CorpusID:160025533}.

\bibitem[Rahimi et~al.(2024)Rahimi, D'Amario, Yamada, Takemoto, Sasaki, and Boix]{rahimi2024d3datadiversitydesign}
Amir Rahimi, Vanessa D'Amario, Moyuru Yamada, Kentaro Takemoto, Tomotake Sasaki, and Xavier Boix.
\newblock D3: Data diversity design for systematic generalization in visual question answering, 2024.
\newblock URL \url{https://arxiv.org/abs/2309.08798}.

\bibitem[Saparov et~al.(2023)Saparov, Pang, Padmakumar, Joshi, Kazemi, Kim, and He]{saparov2023testinggeneraldeductivereasoning}
Abulhair Saparov, Richard~Yuanzhe Pang, Vishakh Padmakumar, Nitish Joshi, Seyed~Mehran Kazemi, Najoung Kim, and He~He.
\newblock Testing the general deductive reasoning capacity of large language models using ood examples, 2023.
\newblock URL \url{https://arxiv.org/abs/2305.15269}.

\bibitem[Wan et~al.(2024)Wan, Huang, Cai, Quan, Bi, and Shi]{wan2024knowledgefusionlargelanguage}
Fanqi Wan, Xinting Huang, Deng Cai, Xiaojun Quan, Wei Bi, and Shuming Shi.
\newblock Knowledge fusion of large language models, 2024.
\newblock URL \url{https://arxiv.org/abs/2401.10491}.

\bibitem[Wang et~al.(2024)Wang, Shen, Feng, Sun, Teng, and Chen]{wang2024alpineunveilingplanningcapability}
Siwei Wang, Yifei Shen, Shi Feng, Haoran Sun, Shang-Hua Teng, and Wei Chen.
\newblock Alpine: Unveiling the planning capability of autoregressive learning in language models, 2024.
\newblock URL \url{https://arxiv.org/abs/2405.09220}.

\bibitem[Wang et~al.(2023)Wang, Wei, Schuurmans, Le, Chi, Narang, Chowdhery, and Zhou]{wang2023selfconsistencyimproveschainthought}
Xuezhi Wang, Jason Wei, Dale Schuurmans, Quoc Le, Ed~Chi, Sharan Narang, Aakanksha Chowdhery, and Denny Zhou.
\newblock Self-consistency improves chain of thought reasoning in language models, 2023.
\newblock URL \url{https://arxiv.org/abs/2203.11171}.

\bibitem[Wei et~al.(2023)Wei, Wang, Schuurmans, Bosma, Ichter, Xia, Chi, Le, and Zhou]{wei2023chainofthoughtpromptingelicitsreasoning}
Jason Wei, Xuezhi Wang, Dale Schuurmans, Maarten Bosma, Brian Ichter, Fei Xia, Ed~Chi, Quoc Le, and Denny Zhou.
\newblock Chain-of-thought prompting elicits reasoning in large language models, 2023.
\newblock URL \url{https://arxiv.org/abs/2201.11903}.

\bibitem[Yang et~al.(2024)Yang, Gribovskaya, Kassner, Geva, and Riedel]{yang2024largelanguagemodelslatently}
Sohee Yang, Elena Gribovskaya, Nora Kassner, Mor Geva, and Sebastian Riedel.
\newblock Do large language models latently perform multi-hop reasoning?, 2024.
\newblock URL \url{https://arxiv.org/abs/2402.16837}.

\bibitem[Zadrozny(2000)]{zadrozny2000minimumdescriptionlengthcompositionality}
Wlodek Zadrozny.
\newblock Minimum description length and compositionality, 2000.
\newblock URL \url{https://arxiv.org/abs/cs/0001002}.

\bibitem[Zhang et~al.(2024)Zhang, Zhu, Saphra, Kleiman, Edelman, Tambe, Kakade, and Malach]{zhang2024transcendencegenerativemodelsoutperform}
Edwin Zhang, Vincent Zhu, Naomi Saphra, Anat Kleiman, Benjamin~L. Edelman, Milind Tambe, Sham~M. Kakade, and Eran Malach.
\newblock Transcendence: Generative models can outperform the experts that train them, 2024.
\newblock URL \url{https://arxiv.org/abs/2406.11741}.

\bibitem[Zhu et~al.(2025)Zhu, Chen, Wang, Yu, Zhao, and Jia]{zhu2025effectivellmknowledgelearning}
Mingkang Zhu, Xi~Chen, Zhongdao Wang, Bei Yu, Hengshuang Zhao, and Jiaya Jia.
\newblock Effective llm knowledge learning via model generalization, 2025.
\newblock URL \url{https://arxiv.org/abs/2503.03705}.

\end{thebibliography}
\bibliographystyle{colm2025_conference}

\newpage
\appendix
\section{Proofs and Analysis}
\subsection{Skill selection analysis}\label{appendix:selection}

\begin{proof}[Proof of Theorem~\ref{thm:skill_selection}]
Let $a$ and $b$ denote two experts. For transcendence to hold, we have $R_{p_{test}}(h_{\bar{D}}) > R_{p_{test}}(f_a)$ and $R_{p_{test}}(h_{\bar{D}}) > R_{p_{test}}(f_b)$. Starting with the first inequality, we have

\begin{align*}
    0 < R_{p_{test}}(h_{\bar{D}}) - R_{p_{test}}(f_a) &= \mathbb{E}_{x\sim p_{test}} \left[ r_x(\bar{f}) - r_x(f_a)\right] \\
    &=  \mathbb{E}_{x\sim p_{test}} \left[ g(a|x) r_x(f_a) + (1-g(a|x))r_x(f_b) - r_x(f_a)\right] \\
    &=  \mathbb{E}_{x\sim p_{test}} \left[ (1-g(a|x)) (r_x(f_b) - r_x(f_a))\right] \\
    &= \mathbb{E}_{x\sim p_{test}} \left[ (g(b|x)) (r_x(f_b) - r_x(f_a))\right]
\end{align*}
And symmetrically, we have $\mathbb{E}_{x\sim p_{test}} \left[ (g(a|x)) (r_x(f_a) - r_x(f_b))\right] > 0$

Combining these we get the desired result:
\[ \mathbb{E}_{x\sim p_{test}}  \left[ \left(r_x(f_a) - r_x(f_b)\right) \left(g(a|x) - g(b|x)\right) \right] > 0\]
\end{proof}

\newtheorem{assumption}{Assumption}[section]

\subsection{Skill generalization analysis}\label{appendix:generalization}
We develop a simple case study to provide intuition for the skill generalization setting, based on our knowledge graph-based experiment setup. 

Let $G = (V,E)$ denote a knowledge graph where the nodes $V$ represent entities and the edges represent relational facts between entities. Let $R$ denote a set of relation types. Then each edge $e \in E$ is represented by a tuple $(a, r, b)$ with $a,b \in V$ and $r \in R$. Let $F^{(1)}$ denote the set of one-hop facts in $G$. That is, 
\[F^{(1)} = \{e = (a,r,b) : e \in E\}\]
For simplicity assume there is a deterministic mapping of one-hop facts, that is, for a given prefix $(a,r)$ there is at most one label $b$. We will refer to this type of prefix $(a,r)$ as a ``one-hop input.''

A \textit{two-hop fact} is represented by a tuple $(a, r_1, r_2, c)$ such that there exists a bridge entity $b \in V$ where $(a,r_1,b) \in F^{(1)}$ and $(b,r_2, c) \in F^{(1)}$. Let $F^{(2)}$ denote the set of two-hop facts:
\[
F^{(2)} = \left\{ (a, r_1, r_2, c) \,\middle|\, \exists b \in V \text{ such that } (a, r_1, b),\ (b, r_2, c) \in F^{(1)} \right\}
\]
In particular, $G$ induces a function $f^* : V \times R \times R \to V$ that encodes the two-hop facts in the knowledge graph. We assume $f^*$ is only defined on triples $(a, r_1, r_2)$ for which such a bridge entity $b$ exists in the graph. We will refer to such a triple $(a, r_1, r_2)$ as a ``two-hop input.''

Then $f^*$ can be expressed as a compositional function:
\[f^*(a,r_1,r_2) = g^*(g^*(a,r_1),r_2) \quad \mathrm{where\ \ } g^*(a,r) = b : (a,r,b) \in F^{(1)} \]

Let the input space $\mathcal{X}$ contain strings of the type $(a,r_1,r_2)$ such that $a \in V$, $r_1,r_2 \in R$ and $(a,r_1,r_2,c)$ is a two-hop fact in $G$ for some $c \in V$.  Define the output space as $\mathcal{Y} = V \cup \{\epsilon\}$ where $\epsilon $ represents a default null label. 

 Now we will partition the graph according to $k$ ``experts.'' Let $\{F^{(1)}_i : i=1\dots k\}$ be a partition of one-hop facts of the graph. Then define the input space of expert $i$ as 
\[ \mathcal{X}_i = \left\{(a, r_1, r_2) \mid \exists b,c : (a,r_1,b), (b, r_2, c) \in F_i^{(1)}\right\}\]

That is, $\mathcal{X}_i$ is the two-hop inputs where both relevant one-hop facts are in $F_i$. Let $p_i$ be some distribution over $\mathcal{X}_i$ such that every element of $\mathcal{X}_i$ has non-zero probability. Let each expert have a label function $f_i : \mathcal{X}_i \to Y$. Assume that given an input $x \in \mathcal{X}_i$, expert $i$ will always provide the correct label. That is, for any $x = (a,r_1,r_2)$, $f_i(x) = c$ where $(a, r_1, r_2, c) \in F^{(2)}$.

Let $D = \{ (x, f_i(x)) : x \sim \mathcal{X}_i, i \sim [1,\dots, k] \}$ such that $|D| = N$ be a training set of $N$ unique samples.

We will define a hypothesis class that contains two classes of memorizer functions: one class $\mathcal{H}_{mem}$ that uses a single lookup via a lookup table $T^{(2)}_h$ to map inputs to outputs, and a second class $\mathcal{H}_{comp}$ that uses a compositional function with a lookup table $T_h^{(1)}$. Formally, let $T^{(2)}_h, T^{(1)}_h$ be lookup tables where $T^{(2)}_h$ is size $V  \times R \times R$ and $T^{(1)}_h$ is of size $V \times R$. Let $\mathcal{H} = \mathcal{H}_{mem} \cup \mathcal{H}_{comp}$, where

\[
\mathcal{H}_{\text{mem}} = \left\{ h \mid h(a, r_1, r_2) = T^{(2)}_h[(a, r_1, r_2)] \text{ for some } T^{(2)}_h : V \times R \times R \to \mathcal{Y} \right\}
\]

\[
\mathcal{H}_{\text{comp}} = \left\{ h \mid h(a, r_1, r_2) = g\left(g(a, r_1), r_2\right),\ \text{where } g(a, r) = T^{(1)}_h[(a, r)] \text{ for some } T^{(1)}_h : V \times R \to \mathcal{Y} \right\}
\]

Define the complexity of a lookup table $T$ to be the number of non-null mappings:
\[\kappa(T) := \left| \{ x \mid T(x) \neq \epsilon \} \right|\]

Define a \textit{lookup function} $f$ to be a function whose only operation is a single lookup with a table $T_f$. Then define the complexity of the lookup function $\kappa(f) = \kappa(T_f)$. Assume there is a fixed overhead associated with composing functions. That is, for a composed function of the form \( f(f(\cdot), \cdot) \), we define its complexity as:
\[
\kappa(f(f(\cdot), \cdot)) = \kappa(f) + \kappa_{\text{comp}},
\]
where \( \kappa_{\text{comp}} \) is a constant representing the additional cost of composition compared to independent lookups.

Under an ERM learner, for training set $D$ we have that
\[h_{D} \in \arg\min_{h\in \mathcal{H}} \mathbb{E}_{(x,y) \sim D} [ H(y, h(x)) ] \]

This setting is realizable, so we can write $h_D \in \mathcal{H}_D^*$ where $\mathcal{H}_D^* \subseteq \mathcal{H}$ is the set of hypotheses that achieve zero loss. Any $h$ that correctly memorizes the two-hop examples in $D$ will achieve zero loss. However, this gives us no guarantee as to how the function generalizes to unseen two-hop examples -- namely, two-hop facts in which the first hop and the second hop belong to different expert partitions.

Here we will assume a \textit{simplicity bias} to allow us to characterize when we will find the generalizing solution:

\[ h_{D} \in \arg\min_{h \in \mathcal{H_D^*}} \kappa(h)\]

We now argue that, under reasonable conditions, the learner will prefer a compositional solution due to its lower complexity.

Any $h \in \mathcal{H}_{mem} \cap \mathcal{H}_D^*$ must store a separate mapping for each training example. Thus, its complexity satisfies:
\[\kappa(h) \geq |D|\]

By contrast, for any compositional function $h \in \mathcal{H}_{comp}$, the lookup table $T_h^{(1)}$ only needs to store one-hop facts. Therefore, its complexity is bounded by
\[\kappa(h) \leq |F^{(1)}| + \kappa_{comp}\]
where $\kappa_{comp}$ accounts for some fixed additional cost of composition.

This leads to a sufficient condition under which the learner prefers a compositional solution:
$|D| \geq |F^{(1)}| + \kappa_{comp}$

To understand when this condition is likely to hold, we analyze the maximum size of the training set $|D|$ for a given graph. Importantly, $|D|$ is bounded by the number of two-hop facts where both hops lie within a single expert's domain. Therefore, satisfying the condition above requires that enough such examples exist in the graph.

Let $d_{in}(v)$ and $d_{out}(v)$ denote the in-degree and out-degree of node $v$. Then the total number of one-hop facts in the graph is
\[|F^{(1)}| = \sum_{v \in V} d_{in}(v) = \sum_{v \in V} d_{out}(v)\]

Each node $v$ induces $d_{in}(v) \cdot d_{out}(v)$ two-hop facts. However, the training data consists only of two-hop facts in which both hops belong to the same expert partition $F_i^{(1)}$. The total number of such training examples is bounded by:
\[|D| \leq \sum_{i=1}^k |\mathcal{X}_i| = \sum_{i=1}^k \sum_{v \in V} d_{in}^{(i)}(v) \cdot d_{out}^{(i)}(v)\]

where $d_{in}^{(i)}(v)$ and $d_{out}^{(i)}(v)$ are the in/out degrees of node $v$ restricted to edges in partition $F_i^{(1)}$.

Combining the complexity bound and the dataset size estimate, we obtain a sufficient condition under which the learner prefers a compositional solution:
\[ \sum_{v \in V} d_{in}(v) + \kappa_{comp} < \sum_{i=1}^k \sum_{v \in V} d_{in}^{(i)}(v) \cdot d_{out}^{(i)}(v)\]

This condition intuitively requires that enough valid two-hop facts lie within the domain of a single expert. Because expert knowledge is structured in a shared latent space, via reusable one-hop representations, these facts can be composed efficiently. When this holds, the training set can be large enough for the compositional function to be simpler than memorization -- causing the learner, under a simplicity bias, to prefer generalization.

\newpage
\section{Training Details}
\subsection{Knowledge graph generation}
\label{appendix:kg_generation}
For our experiments, we use a knowledge graph filled with fictional entities to generate facts the model has not seen during pretraining. We create the knowledge graph by taking the structure of the \textsc{wikidata}-based graph from \cite{cohen2023evaluatingrippleeffectsknowledge} and we use GPT-4o-mini \citep{openai2024gpt4technicalreport} to replace the entities with fictional names. To generate entities, we start by asking GPT-4o-mini for a set of fictional country names to seed the fictional graph. These fictional country names are randomly assigned to replace the country entities in the original graph. We then run a BFS-like procedure in which the updated neighbors of each node are used as a context to generate the fictional name of the current node. We also provide a starting letter drawn at random to increase diversity. This guarantees that GPT-4o-mini provides a fictional entity name based only on its fictional context. The resulting knowledge graph has approximately 25,000 entities, 39 relation types, and 54,500 edges.

\subsection{Compute Resources}

All training runs used 1-4 Nvidia H100s for 1-8 hours.

\subsection{Skill generalization experiment details}
\label{appendix:generalization_experiments}
\begin{figure}[h!]
    \centering
    \includegraphics[width=\linewidth]{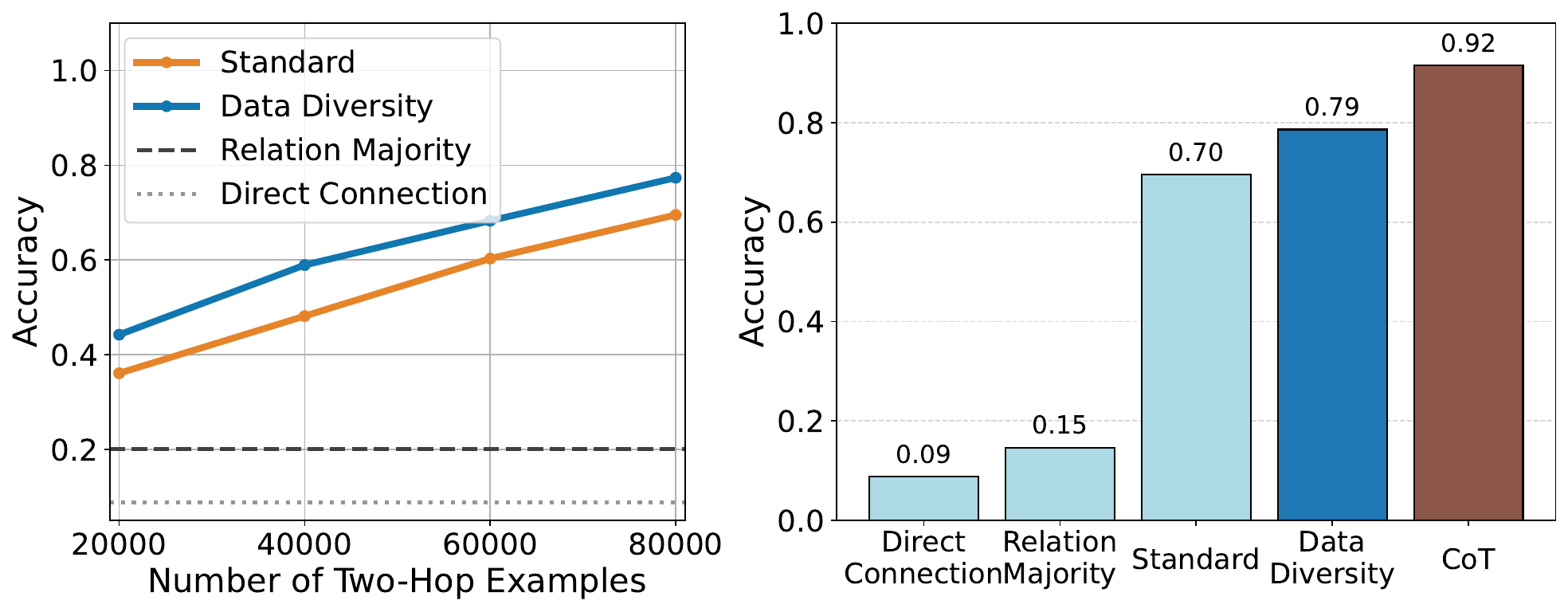}
    \caption{Left: Number of training samples vs within-expertise two-hop query accuracy on the held-out samples. Right: Comparison of methods in terms of within-expertise query accuracy.}
    \label{fig:gen_val}
\end{figure}

\subsubsection{Baselines}

We include baselines based on two potential ``shortcuts" as described in \cite{yang2024largelanguagemodelslatently}. For the direct connection baseline, we want to measure whether the tail entity of the two-hop fact can be predicted from frequent cooccurrences with the head entity in the training data. 
To do this, we measure the number of two-hop prompts in which the head-tail pair also occur in the one-hop fact training set. That is, the number of two hop prompts $(a, r_1, r_2) \mapsto c$ such that $c \in \{ c : (a, r, c) \in F_{\text{one-hop}} \text{ for some relation } r \}$ where $F_{\text{one-hop}}$ denotes the one-hop facts in the knowledge graph. This effectively counts \textit{all} head-tail pairs which co-occur in the one-hop facts. This accounts for 528 out of 6133 validation within-expertise two hop facts, corresponding to an accuracy of 0.086, and 5666 out of 64811 across-expertise two hop facts, corresponding to an accuracy of 0.087. We also check head-tail co-occurrences in the training set of two-hop facts and find that this corresponds to an accuracy of $< 5\%$ in both the validation and across-expertise sets, so we leave this out for clarity.

For the relation majority baseline, we measure the accuracy achieved from simply assigning the most common entity of the correct type (e.g. person, location, occupation) based on the second relation of the two-hop fact.

\subsubsection{Scores on validation set}

The validation set for two-hop facts consists of roughly 6000 queries of within-expertise two-hop facts that are held out during training. Scores are reported in Figure~\ref{fig:gen_val}.

\subsubsection{Data diversity experiments}

\begin{table}[h]
    \centering
    \renewcommand{\arraystretch}{1.3} 
    \setlength{\tabcolsep}{8pt} 
    \begin{tabular}{|c|l|p{10cm}|}
        \hline
        \textbf{Level} & \textbf{Method} & \textbf{Sample} \\
        \hline
        1 & 1 template per relation & \textit{The place of death of Glimmerhold is Galadron Abyss. The name of the head of government of Galadron Abyss is Morrathis Voidstrider.} \\
        \hline
        2 & 4 templates per relation & \textit{Glimmerhold died in Galadron Abyss. Galadron Abyss's head of government is Morrathis Voidstrider.} \\
        \hline
        3 & GPT generated & \textit{Glimmerhold died in Galadron Abyss, which is governed by Morrathis Voidstrider.} \\
        \hline
        4 & GPT generated & \textit{Glimmerhold met its demise in the treacherous depths of Galadron Abyss, a notable locale governed by Morrathis Voidstrider. This abyss, shrouded in mystery, serves as a poignant backdrop to the tale of Glimmerhold’s fate.} \\
        \hline
    \end{tabular}
    \caption{Comparison of diversity levels in one-hop paragraph generation.}
    \label{tab:diversity_levels}
\end{table}

We run preliminary experiments on the usefulness of phrasing diversity in improving the model's to answer unseen two-hop facts. For each sample, we generate a version at four diversity levels:
\begin{itemize}
    \item Level 1: Facts are written from a single template as in the standard setting. Multihop facts are written in a single template.
    \item Level 2: Each relation type has four templates. Each time an edge is written, one of four templates is sampled u.a.r. Multihop facts are written in a single template.
    \item Level 3: We use GPT-4o-mini to generate a low creativity entry based on the Level 1 paragraph. The following prompt is used: \textit{``You will be provided with a list of facts about an entity. Your job is to write a 10-50 word encyclopedia entry about the given entity. You should not make up additional information, just rewrite the facts.''}
    \item Level 4: We use GPT-4o-mini to generate a high creativity entry based on the Level 2 paragraph. The following prompt is used: \textit{``You will be provided with a list of facts about an entity. Your job is to write a 10-50 word encyclopedia entry about the given entity. You should not make up additional information, just rewrite the facts. Use creative word choices and phrasing.''}
\end{itemize}

To increase data diversity of the two-hop sentences, we ask GPT-4o-mini to rephrase each templated two-hop sentence. For the models marked with ``Data diversity," each model was trained with 1.5 million paragraph samples, each provided at the four levels of diversity for a total of 6 million samples. Models were additionally provided with the rephrased two-hop samples repeated 20x per epoch along with the templated two-hop samples.

\subsubsection{Chain-of-Thought}

For models that are allowed CoT to answer two-hop queries, we use a QA format and provide the intermediate entity before the final answer: For example, \textit{``What is the award received by the screenwriter of Glyndor Aetheralis? Ithryndor Glaciaris; Xyphorian Starblossom.''} During evaluation, we allow the model to generate an intermediate step proceeding the semicolon and judge accuracy based only on the final answer.

\end{document}